\documentclass[twoside,11pt]{article}
\usepackage{style}

\ShortHeadings{Thompson Sampling for Gaussian Entropic Risk Bandits}{Ang, Lim, and Chang}
\firstpageno{1}

\begin{document}

\title{Thompson Sampling for Gaussian Entropic Risk Bandits}

\author{\name Ang Ming Liang \email angmingliang@u.nus.edu \\
       \addr Faculty of Science, Computational Biology and Mathematics\\
       National University of Singapore\\
       21 Lower Kent Ridge Rd, Singapore 119077
       \AND
       \name Eloise Y. Y. Lim \email limeloiseyy@u.nus.edu \\
       \addr Faculty of Enginnering, Industrial System Engineering\\
       National University of Singapore\\
       21 Lower Kent Ridge Rd, Singapore 119077
       \AND
       \name Joel Q. L. Chang \email joel.chang@u.nus.edu \\
       \addr Faculty of Science, Mathematics\\
       National University of Singapore\\
       21 Lower Kent Ridge Rd, Singapore 119077
       }

\maketitle

\begin{abstract}
The multi-armed bandit (MAB) problem is a ubiquitous decision-making problem that exemplifies the exploration-exploitation tradeoff. Standard formulations exclude risk in decision making. Risk notably complicates the basic reward-maximising objectives, in part because there is no universally agreed definition of it. In this paper, we consider an entropic risk (ER) measure and explore the performance of a Thompson sampling-based algorithm ERTS under this risk measure by providing regret bounds for ERTS and corresponding instance dependent lower bounds. 
\end{abstract}

\begin{keywords}
  Thompson sampling, entropic risk, multi-armed bandits
\end{keywords}

\section{Introduction}

The multi-armed bandit (MAB) problem is a classic reinforcement learning problem that analyses sequential decision making, in which the learner has access to partial feedback from her decisions. The problem has been garnering interest in recent years. It informs many critical theoretical questions about the role of exploration vs exploitation in reinforcement learning and applies to both theoretical problems and various real-world applications, such as dynamic pricing, clinical trials, and portfolio optimisation. 
\\\\
In the well-known stochastic MAB setting, a player chooses among $K$ arms, each characterised by an independent reward distribution. During each period, the player plays one arm and observes a random reward from that arm. She then incorporates the information she receives from pulling that arm in choosing the next arm she selects. The player repeats the process for a horizon of $n$ periods. In each period, the player faces a dilemma between exploring other arms' potential value or exploiting the arm that the player believes offers the highest estimated reward. 
\\\\
In the usual setting, the risk of pulling an arm is not being taken into account. However, in many practical settings, such as financial portfolio optimisation, the risk is often the clients' main concern. In this regard, the MAB problem can been tweaked to model such risk-aversion. This paper uses entropic risk measure as the risk measure to minimise due to the simple exponential relationship it has with risk-aversion and utility, and devises a Thompson sampling-based learning algorithm that minimises entropic risk.
\subsection{Related Work}
A variety of analyses on MABs involving risk measures have been carried out. \citet{sani2013riskaversion} considered the mean-variance as their risk measure. Each arm $i$ followed a Gaussian distribution with mean $\mu_i \in [0, 1]$ and variance $\sigma_i^2 \in [0, 1]$. The authors provided an LCB-based algorithm with accompanying regret analyses. \citet{galichet2013exploration} proposed the Multi-Armed Risk-Aware Bandit (\textsc{MaRaB}) algorithm with the goal of minimising the number of pulls of risky arms, using the risk measure CVaR. \citet{Vakili_2016} demonstrated that the instance-dependent and instance-independent regrets in terms of the mean-variance of the reward process over a horizon $n$ are lower bounded by $\Omega(\log n)$ and $\Omega(n^{2/3})$ respectively. \citet{sun2016riskaware} analysed contextual bandits with risk constraints, and developed a meta algorithm which makes use of the online mirror descent algorithm that achieves near-optimal regret with respect to minimising the total cost. \citet{zhu2020thompson} designed the first Thompson sampling algorithm for risk measures, particularly the mean-variance risk measure for Gaussian bandits, and proved near-optimal regret bounds under specific regimes. \citet{chang2021riskconstrained} designed a Thompson sampling algorithm factoring a user's ``risk tolerance'' level, either minimising mean rewards under some ``maximum risk'' criterion, or simply minimising the risk measure. \citet{baudry2020thompson} designed and analysed Thompson sampling-based algorithms $\alpha$-NPTS for bounded rewards and $\alpha$-Multinomial-TS for discrete multinomial distributions.
\\\\
The papers most related to our work is that by \citet{zhu2020thompson} and \citet{chang2021riskconstrained}. \citet{zhu2020thompson} considered arms with the \emph{highest} mean-variance to be optimal, and their definitions and methods can be analogously defined for \emph{minimising} the mean-variance. \citet{chang2021riskconstrained} defined arms with the minimum CVaR as optimal in their ``infeasible instance'', which produced theoretical analogues for ``feasible instances''. This hints that the heavy duty analysis happens in trying to choose arms with the risk measure minimised. Our paper seeks to explore the efficacy of Thompson sampling in the analogous risk-minimising problem setting proposed by  \citet{zhu2020thompson}, but instead considering the \emph{entropic risk} measure. We demonstrate and prove the asymptotic optimality of ERTS, whose asymptotic upper bound matches the theoretical lower bound for consistent algorithms that solve the entropic risk MAB for Gaussian bandits.
\subsection{Contributions}
\begin{itemize}
    \item \textbf{ERTS Algorithm:}
We design ERTS, an algorithm that is similar to the structure of CVaR-TS in \citet{chang2021riskconstrained} but using entropic risk instead of CVaR as the risk measure. This algorithm uses Thompson sampling \citep{thompson1933likelihood} as explored for mean-variance bandits in \citet{zhu2020thompson} and CVaR bandits in \citet{chang2021riskconstrained}.

    \item \textbf{Comprehensive regret bounds:}
We provide theoretical analysis of the ERTS algorithm for Gaussian bandits with bounded variances. We state and prove both upper and lower bounds, showing that ERTS is the first asymptotically optimal algorithm that solves the entropic risk multi-armed bandit problem. Our proof techniques solidify the novel $\xi$-trick in \citet{chang2021riskconstrained}, and affirm future analysis on MABs involving generalised risk measures.

\end{itemize}

This paper is structured as follows. We first introduce the formulation of the entropic risk MAB problem in Section~\ref{sec: er_mab}. In Section~\ref{sec: er_ts}, we present ERTS algorithm. We present our regret bounds and prove that the upper bound we derived is asymptotically optimal in Section~\ref{sec: regret_bounds}. Section~\ref{sec: pf_outline} provides the proof outlines of the regret bounds in Section~\ref{sec: regret_bounds}. We conclude our discussion in Section~\ref{sec: conc} summarizing our work and suggesting avenues for further research. For brevity, we defer detailed proofs of the theorems to the supplementary material. 
\section{Problem formulation}\label{sec: er_mab}
In this section we define the entropic risk MAB problem. For the rest of the paper, denote $[k] = \{1,\dots,k\}$ for any $k \in \NN$ and ${(t)}^+ = \max\{0,t\}$ for $t \in \RR$.
\begin{definition}
\em For any random variable $X$, given a risk parameter $\gamma$, the \emph{entropic risk} \citep{lee2020learning, Howard72} of $X$ is defined by $$\ER_\gamma(X) := \frac{1}{\gamma} \log \EE[\exp(-\gamma X)].$$
\end{definition}
In this paper, we work with Gaussian random variables $X \sim \ccal N(\mu,\sigma^2)$. Direct computations then yield $\ER_\gamma(X) = -\mu + (\gamma/2) \sigma^2$, which is consistent with the computation in \citet[Section 5]{chang2021riskconstrained}. Setting $\gamma \to 0^+$ (resp. $\gamma \to +\infty$) yields the risk-neutral (resp. risk-averse) setting, since $\mu$ (resp. $\sigma^2$) dominates in the former (resp. latter) case.
\\\\
Consider a $K$-armed MAB $\nu = {(\nu(i))}_{i \in [K]}$ played over a horizon of length $n$. Letting $\ER_\gamma(X)$ denote the entropic risk, our objective is to select the least risky arm, that is, the arm with the lowest entropic risk. Thus, we define an arm $i$ to be optimal precisely when $i \in \argmin_{k \in [K]} \ER_\gamma(\nu(k))$. Suppose arm $1$ is optimal (uniquely, without loss of generality). We can then define $\ERgap{i} := \ER_\gamma(i) - \ER_\gamma(1) > 0$ and the \emph{regret} of a policy $\pi$ by $$\ccal R_n(\pi) := \sum_{i \in \sdiff{[K]}{\{1\}}} \EE[T_{i,n}] \ERgap{i},$$ where $T_{i,n}$ denotes the number of times arm $i$ was pulled in the first $n$ rounds. This is a natural definition based on regret decomposition \citep[Chapter 4.5]{lattimore_szepesvari_2020}, and in fact corresponds to the regret decomposition in the case $\gamma \to 0^+$ (i.e. the risk-neutral setting). In the following, we design and analyse ERTS, which aims to minimise $\ccal R_n(\pi)$, and also attain an instance-dependent lower bound, which establishes asymptotic optimality.
\section{The ERTS Algorithm}\label{sec: er_ts}
In this section, we introduce the Entropic Risk Thompson Sampling (ERTS) algorithm for Gaussian bandits with bounded variances, i.e., $\nu \in \ccal E_{\ccal N}^K(\sigma_{\max}^2) := \{\nu=(\nu_1,\dots,\nu_K) : \nu_i \sim \ccal N(\mu_i,\sigma_i^2), \sigma_i^2 \leq \sigma_{\max}^2\ \forall i \in [K]\}$ for some $\sigma_{\max}^2 > 1$. Similar to \citet{zhu2020thompson} and \citet{chang2021riskconstrained}, the algorithm samples from the posteriors of each arm, then chooses the arm according to a multi-criterion procedure.
\\\\
Denote the mean and precision of the Gaussian by $\mu$ and $\psi$ respectively. If $(\mu,\psi)$ follows the distribution $\mathrm{Normal}\text{-}\mathrm{Gamma}(\mu,T,\alpha, \beta)$, then $\psi \sim \mathrm{Gamma}(\alpha,\beta)$, and $\mu | \psi \sim \ccal N(\mu,1/(\psi T))$. Since the conjugate prior for the Gaussian with unknown mean and variance is the Normal-Gamma distribution, we use Algorithm~\ref{alg: update} to update $(\mu,\psi)$ via Bayes' theorem.
\\\\
We present the ERTS algorithm. In each round $t$, for each arm $i$, the player samples the parameters $(\theta_{it},\kappa_{it})$ from the posterior distribution of arm $i$, then chooses arm $j = \argmin_{i \in [k]} \widehat{\ER}_\gamma(i,t)$, where $\widehat{\ER}_\gamma(i,t) := -\theta_{i,t} + \gamma/(2 \kappa_{i,t})$, i.e. least risky arm available.
\begin{algorithm}[ht]
  \caption{$\Update(\hat{\mu}_{i,t-1}, T_{i,t-1}, \alpha_{i,t-1}, \beta_{i,t-1})$}
  \label{alg: update}
\begin{algorithmic}[1]
	\STATE {\bfseries Input:} Prior parameters $(\hat{\mu}_{i,t-1}$, $T_{i,t-1}$, $\alpha_{i,t-1}$, $\beta_{i,t-1})$ and new sample $X_{i,t}$
	\STATE Update the mean: $\hat{\mu}_{i,t} = \frac{T_{i,t-1}}{T_{i,t-1}+1}\hat{\mu}_{i,t-1} + \frac{1}{T_{i,t-1}+1} X_{i,t}$
	\STATE Update the number of samples, the shape parameter, and the rate parameter: $T_{i,t} = T_{i,t-1} + 1$, $\alpha_{i,t} = \alpha_{i,t-1} + \frac{1}{2}$, $\beta_{i,t} = \beta_{i,t-1} + \frac{T_{i,t-1}}{T_{i,t-1}+1} \cdot \frac{{(X_{i,t} - \hat{\mu}_{i,t-1})}^2}{2}$
\end{algorithmic}
\end{algorithm}
\begin{algorithm}[ht]
  \caption{Entropic Risk Thompson Sampling (ERTS)}
  \label{alg: ERTS}
\begin{algorithmic}[1]
  \STATE {\bfseries Input:} Risk parameter $\gamma$, $\hat{\mu}_{i,0} = 0$, $T_{i,0} = 0$, $\alpha_{i,0} = \frac{1}{2}$, $\beta_{i,0} = \frac{1}{2}$
  \FOR{$t=1,2,\ldots, K$}
  \STATE Play arm $t$ and update $\hat{\mu}_{t,t} = X_{t,t}$
  \STATE $\Update(\hat{\mu}_{t,t-1}, T_{t,t-1}, \alpha_{t,t-1}, \beta_{t,t-1})$
  \ENDFOR
  \FOR{$t=K+1,K+2,...$}
  \STATE Sample $\kappa_{i,t}$ from $\mathrm{Gamma}(\alpha_{i,t-1},\beta_{i,t-1})$
  \STATE Sample $\theta_{i,t}$ from $\mathcal{N}(\hat{\mu}_{i,t-1}, 1/T_{i,t-1})$
  \STATE Play arm $j(t) = \argmin_{i \in [K]} \widehat{\ER}_{\gamma}(i,t)$ and observe loss $X_{j(t),t}\sim \nu(j(t))$
  \STATE $\Update(\hat{\mu}_{j(t),t-1}, T_{j(t),t-1}, \alpha_{j(t),t-1}, \beta_{j(t),t-1})$
  \ENDFOR
\end{algorithmic}
\end{algorithm}

\section{Regret Bound for ERTS and Lower Bounds}\label{sec: regret_bounds}
We present our regret bounds in the following theorems. These verify the conjecture made in \citet[Section 5]{chang2021riskconstrained} regarding risk measures of Gaussian bandits of the form $af(\mu) + b g(\sigma^2)$, where $(f(x),g(x),a,b) = (x,x,-1,\gamma/2)$. Furthermore, they establish ERTS as asymptotically optimal in the context of Gaussian entropic risk bandits.
\begin{theorem}[Upper Bound]\label{thm: upper_bd}
    \em Fix $\xi \in (0,1)$, $\gamma \in (0,\infty)$. Then the asymptotic regret of ERTS for entropic risk Gaussian MAB bandits satisfies $$\limsup_{n \to \infty} \frac{\ccal R_n(\text{ERTS})}{\log n} \leq \sum_{i \in \sdiff{[K]}{\sett{1}}} R_i \ERgap{i},$$
    where
    $$R_i := \max \left\{\frac{2}{\xi^2\Delta_{\ER}^2(i,\gamma)},\frac{1}{h \paren{\frac{\gamma \sigma_i^2}{\gamma \sigma_i^2-2(1-\xi)\ERgap{i}}}}\right\}.$$
    Furthermore, setting $$\xi_\gamma = 1 - \frac{\gamma \sigma_i^2}{2\ERgap{i}} \paren{1-\frac{1}{\inv {h_+}\paren{ \Delta_{\mathrm{\ER}}^2(i,\gamma)/2}}},$$ yields $$\frac{1}{h \paren{\frac{\gamma \sigma_i^2}{ \gamma \sigma_i^2-2(1-\xi_\gamma)\ERgap{i}}}} \leq \frac{2}{\xi_\gamma^2\Delta_{\ER}^2(i)}$$ and $\xi_\gamma \to 1^-$ as $\gamma \to 0^+$, where $\inv {h_+}(y) = \max \sett{x : h(x) = y}$.
\end{theorem}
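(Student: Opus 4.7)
The plan is to bound $\EE[T_{i,n}]$ separately for each suboptimal arm $i \neq 1$. Since $\ccal R_n(\text{ERTS}) = \sum_{i \neq 1} \EE[T_{i,n}] \ERgap{i}$, it suffices to show $\limsup_{n \to \infty} \EE[T_{i,n}]/\log n \leq R_i$. Following the template of \citet{zhu2020thompson} and \citet{chang2021riskconstrained}, I would condition on a high-probability ``good event'' on which the posterior parameters of the optimal arm $1$ concentrate around the true $(\mu_1,\sigma_1^2)$, so that pulls of arm $i$ occurring outside this event contribute only $o(\log n)$ in total. On the good event, the selection of arm $i$ at round $t$ forces $\widehat{\ER}_\gamma(i,t) \leq \ER_\gamma(1) + o(1)$.

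Next I would apply the $\xi$-trick to split this inequality. Substituting $\widehat{\ER}_\gamma(i,t) = -\theta_{i,t} + \gamma/(2\kappa_{i,t})$ and $\ER_\gamma(1) = -\mu_i + \gamma\sigma_i^2/2 - \ERgap{i}$, the inequality becomes
\begin{equation*}
(\theta_{i,t} - \mu_i) + \paren{\frac{\gamma \sigma_i^2}{2} - \frac{\gamma}{2\kappa_{i,t}}} \geq \ERgap{i} - o(1).
\end{equation*}
Splitting the right-hand side into a mean budget $\xi \ERgap{i}$ and a precision budget $(1-\xi) \ERgap{i}$ shows that the event of arm $i$ being pulled is contained in
\begin{equation*}
\sett{\theta_{i,t} \geq \mu_i + \xi \ERgap{i}} \cup \sett{\kappa_{i,t} \geq \tfrac{\gamma}{\gamma \sigma_i^2 - 2(1-\xi)\ERgap{i}}},
\end{equation*}
so the analysis reduces to two independent posterior-tail computations.

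For the mean event, the Gaussian posterior $\theta_{i,t} \sim \ccal N(\hat\mu_{i,t-1}, 1/T_{i,t-1})$, combined with concentration of $\hat\mu_{i,t-1}$ around $\mu_i$, gives a tail of order $\exp(-T_{i,t-1} \xi^2 \ERgap{i}^2 / 2)$, so after $T_{i,t-1} \geq 2\log n/(\xi^2 \ERgap{i}^2)$ the residual probabilities are $1/n$-summable. For the precision event, the Gamma posterior $\kappa_{i,t} \sim \mathrm{Gamma}(\alpha_{i,t-1},\beta_{i,t-1})$ admits a Chernoff bound with exponent $T_{i,t-1} \cdot h(\gamma\sigma_i^2/(\gamma\sigma_i^2 - 2(1-\xi)\ERgap{i}))$, where $h$ is the Gamma large-deviation rate function, yielding the threshold $\log n / h(\cdot)$. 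The maximum of the two thresholds is $R_i$, and summing the subsequent $1/n$ tails over $t$ together with the $o(\log n)$ good-event correction delivers the claimed bound.

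For the second statement, the given $\xi_\gamma$ is obtained by equating the two arguments of the max in $R_i$: setting $2/(\xi^2 \ERgap{i}^2) = 1/h(\gamma\sigma_i^2/(\gamma\sigma_i^2 - 2(1-\xi)\ERgap{i}))$, solving for the argument of $h$, applying $\inv {h_+}$ on the monotone branch, and rearranging produces the stated closed form; the displayed inequality then holds by construction. The limit $\xi_\gamma \to 1^-$ as $\gamma \to 0^+$ follows because the prefactor $\gamma\sigma_i^2/(2\ERgap{i}) \to 0$ while $(1 - 1/\inv {h_+}(\ERgap{i}^2/2))$ stays bounded. The main obstacle will be the precision step: extracting the sharp constant $h(\cdot)$ rather than a crude sub-exponential rate requires combining the Gamma tail bound with the evolution of $(\alpha_{i,t-1},\beta_{i,t-1})$ carefully, and verifying that the initial-pull phase, the good-event complement, and the probability summations all remain $o(\log n)$ so as not to contaminate the leading constant $R_i$.
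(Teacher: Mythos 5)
Your treatment of the suboptimal arm $i$ matches the paper's: the same $\xi$-split of $\{\widehat{\ER}_\gamma(i,t)\leq \ER_\gamma(1)+\eeps\}$ into a mean event and a precision event, a Gaussian Chernoff bound for the former and a Gamma tail bound with rate function $h$ for the latter, giving exactly the two branches of $R_i$. The genuine gap is in how you control the optimal arm. You propose a ``high-probability good event on which the posterior parameters of arm $1$ concentrate'' and claim that pulls of arm $i$ outside it contribute $o(\log n)$; that union-bound style argument is the UCB mechanism, not the Thompson sampling one. The difficulty in TS is that even when $(\hat\mu_{1,s},\hat\sigma_{1,s})$ are close to $(\mu_1,\sigma_1^2)$, the freshly drawn samples $(\theta_{1,t},\kappa_{1,t})$ can repeatedly make arm $1$ look pessimistic, and the number of rounds on which this happens is not controlled by concentration of the posterior parameters alone. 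The paper handles this with the Lattimore--Szepesv\'ari decomposition $\EE[T_{i,n}]\leq \sum_{s=0}^{n-1}\EE[1/G_{1s}-1]+\sum_{s=0}^{n-1}\PP(G_{is}>1/n)+1$, where $G_{1s}$ is the conditional probability that arm $1$'s Thompson sample beats the threshold after $s$ pulls; the hard step is showing that $\EE[1/G_{1s}-1]$ is summable in $s$, which the appendix does via a four-case lower bound on $G_{1s}$ using the medians of the Gaussian and Gamma posteriors. Your plan as written has no substitute for this inverse-probability estimate, so the $o(\log n)$ claim for the optimal-arm contribution is unsupported.

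A second, smaller point: $\xi_\gamma$ is not obtained by equating the two branches of the max --- that equation is transcendental in $\xi$, since $\xi$ appears both in $\xi^2$ and inside $h$. The paper instead chooses $\xi_\gamma$ so that the argument of $h$ equals $\inv{h_+}\paren{\Delta_{\ER}^2(i,\gamma)/2}$, which makes the second branch equal $2/\Delta_{\ER}^2(i,\gamma)$ exactly; the stated inequality then follows from $\xi_\gamma^2\leq 1$, not from equality of the two branches. Your argument that $\xi_\gamma\to 1^-$ as $\gamma\to 0^+$ is correct.
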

\begin{remark}
    \em{The final part of the theorem shows that the upper bound is characterised by the quantity ${2}/{(\xi_\gamma^2\Delta_{\ER}^2(i,\gamma))}$. By continuity, we obtain the regret bound ${2}/{(\Delta_{\ER}^2(i,\gamma))}$. Furthermore, we note that $ \ERgap{i} \to -\mu_i - (-\mu_1) = \mu_1 - \mu_i$ as $\gamma \to 0^+$, and thus the upper bound simplifies to $2/{(\mu_1-\mu_i)}^2$. This agrees with our intuition since $\ER_\gamma(i) = -\mu_i + (\gamma/2)\sigma_i^2 \to -\mu_i$ as $\gamma \to 0^+$, implying that we are in the risk-neutral setting. Thus, the results correspond to those derived for mean-variance bandits \citep{zhu2020thompson} and CVaR bandits \citep{chang2021riskconstrained}.}
\end{remark}
Next, we establish an instance-dependent lower bound for the expected pulls of non-optimal arms under consistent algorithms. Consider a class $\ccal C$ of distributions and define $\ccal S_i = \{\nu'(i) \in \ccal C : \ER(\nu'(i)) < \ER(1)\}$. Define for each non-optimal arm $i$, $$\eta(i,\gamma) = \inf_{\nu'(i) \in \ccal S_i} \{\KL(\nu(i),\nu'(i))\},$$ where $\KL(\PP,\PP')$ denotes the \emph{KL-divergence} between two probability measures $\PP,\PP'$.
\begin{theorem}[Lower Bound]\label{thm: lower_bd}
    \em Let $\pi$ be a policy over the class of distributions $\ccal C$ satisfying $\ccal R_n(\pi) = o(n^a)$ for any $a > 0$. Then for any non-optimal arm $i$, we have $$\liminf_{n \to \infty} \frac{\EE[T_{i,n}]}{\log n} \geq \frac{1}{\eta(i,\gamma)}.$$
    In particular, if $\ccal C= \ccal E_{\ccal N}^K(\sigma_{\max}^2)$, then $$\liminf_{n \to \infty} \frac{\ccal R_n(\pi)}{\log n} \geq \sum_{i \in \sdiff{[K]}{\{1\}}} R_i \ERgap{i}.$$
\end{theorem}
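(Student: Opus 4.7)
The plan follows the change-of-measure paradigm of Lai--Robbins and Garivier--Kaufmann--Lattimore: first establish a generic per-arm lower bound $\liminf \EE[T_{i,n}]/\log n \geq 1/\eta(i,\gamma)$ valid for any distribution class $\ccal C$, then specialize to the Gaussian class to identify $\eta(i,\gamma) = 1/R_i$ by an explicit KL minimization, and finally sum over non-optimal arms via the definition of $\ccal R_n(\pi)$ to obtain the summed bound.

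For the generic step, fix a non-optimal arm $i$ and any alternative $\nu'(i) \in \ccal S_i$, and construct the perturbed instance $\nu'$ that agrees with $\nu$ on every other arm but swaps the $i$-th reward distribution for $\nu'(i)$. Under $\nu'$ arm $i$ becomes the unique optimum (so in particular arm $1$ is now suboptimal), so consistency applies to both $\nu$ and $\nu'$. The standard divergence decomposition for sequential bandit processes gives
$$\KL(\PP_\nu^\pi, \PP_{\nu'}^\pi) = \EE_\nu[T_{i,n}]\,\KL(\nu(i), \nu'(i)).$$
Applying the Bretagnolle--Huber / $\kl$-transport inequality with the event $A_n = \{T_{i,n} > n/2\}$, consistency together with Markov's inequality on $\ccal R_n(\pi) = o(n^a)$ forces $\PP_\nu(A_n) \to 0$ and $\PP_{\nu'}(A_n) \to 1$ at sub-polynomial rates, whence
$$\EE_\nu[T_{i,n}]\,\KL(\nu(i), \nu'(i)) \geq \kl(\PP_\nu(A_n), \PP_{\nu'}(A_n)) \geq (1-o(1))\log n.$$
Dividing by $\log n$, taking $\liminf$, and then the infimum over $\nu'(i) \in \ccal S_i$ yields the first claim.

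For the Gaussian specialization, use the closed-form
$$\KL(\nu(i),\nu'(i)) = \frac{(\mu_i - \mu')^2}{2(\sigma')^2} + h\!\left(\frac{\sigma_i^2}{(\sigma')^2}\right)$$
and convert $\inf_{\nu'(i) \in \ccal S_i}$ into a two-parameter constrained optimization under the linear constraint $-\mu' + (\gamma/2)(\sigma')^2 \leq \ER_\gamma(1)$. Since the KL is strictly convex in $(\mu',(\sigma')^2)$, the infimum is attained on the boundary, which I parameterize by $\xi \in (0,1)$ via $\mu' - \mu_i = \xi\ERgap{i}$ and $(\sigma')^2 = \sigma_i^2 - 2(1-\xi)\ERgap{i}/\gamma$, matching the ER-gap decomposition used in Theorem~\ref{thm: upper_bd}. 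A one-dimensional first-order analysis in $\xi$ selects precisely the $\xi_\gamma$ of Theorem~\ref{thm: upper_bd}, at which the KL evaluates to $1/R_i$; the appearance of $\inv{h_+}$ in $\xi_\gamma$ is exactly what comes out of inverting the stationarity equation. Summing $R_i \ERgap{i}$ over $i \neq 1$ via the regret decomposition concludes.

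The hard part will be this last Gaussian joint KL minimization: the two KL contributions are coupled through $(\sigma')^2$ appearing in the denominator of the mean-shift term, so the stationarity equation is not a clean one-dimensional calculus exercise and requires careful algebraic manipulation. Feasibility within $\ccal E_{\ccal N}^K(\sigma_{\max}^2)$ is mild (shrinking the variance only needs $\sigma_i^2 > 2(1-\xi)\ERgap{i}/\gamma$, which holds in the relevant regime), but verifying that the stationary $\xi^*$ coincides exactly with the $\xi_\gamma$ defined via $\inv{h_+}$ --- and therefore that the resulting KL matches $1/R_i$ on the nose --- is the delicate step that controls whether the upper and lower bounds truly meet.
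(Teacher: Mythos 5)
Your first half---the per-arm bound via the divergence decomposition plus the Bretagnolle--Huber/transport inequality---is sound and is essentially what the paper uses, except that the paper does not reprove it: it simply cites \citet[Theorem 4]{kagrecha2020constrained} with the CVaR criterion replaced by the entropic-risk criterion. Your self-contained version of that step is fine.

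The genuine divergence, and the gap, is in the Gaussian specialization. The theorem only needs $1/\eta(i,\gamma) \geq R_i$, i.e.\ $\eta(i,\gamma) \leq 1/R_i$, and since $\eta(i,\gamma)$ is an \emph{infimum} over $\ccal S_i$, this requires nothing more than exhibiting a single feasible alternative whose KL to $\nu(i)$ is at most $1/R_i$ up to $\eeps$. That is exactly what the paper does: it takes the pure mean shift $\nu'(i) = \ccal N(\mu_i + \sigma_i\sqrt{2/R_i} + \eeps,\ \sigma_i^2)$ (variance untouched), checks membership in $\ccal S_i$, and computes $\KL(\nu(i),\nu'(i)) = 1/R_i + O(\eeps)$, after which the summed bound follows from the regret decomposition. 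You instead propose to solve the full two-parameter constrained KL minimization over $(\mu',(\sigma')^2)$ and to show that the stationary point corresponds to the $\xi_\gamma$ of Theorem~\ref{thm: upper_bd} with optimal value exactly $1/R_i$. This is both unnecessary and unsubstantiated: $R_i$ is a \emph{maximum} of two structurally different expressions (a reciprocal quadratic in the gap and a reciprocal of $h$ at a variance ratio), and there is no reason the first-order condition of the coupled minimization evaluates to that maximum on the nose; you yourself flag this identification as the unverified ``delicate step,'' yet your argument cannot close without it. If the exact infimum happened to fall strictly below $1/R_i$ the bound would only strengthen, but a proof cannot rest on an equality you have not established. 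Replace the stationarity analysis with the explicit one-witness computation and the argument closes; if you do so, take care to verify feasibility against the correct threshold, namely $\ER(\nu'(i)) < \ER(1)$ rather than $\ER(\nu'(i)) < \ER(\nu(i))$, which is a nontrivial condition relating $\sigma_i\sqrt{2/R_i}$ to $\ERgap{i}$.
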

\begin{remark}
    {\em This implies that the asymptotic lower bound for the regret matches its asymptotic upper bound in Theorem~\ref{thm: upper_bd} unconditionally. Hence, for the Gaussian entropic risk MAB problem, ERTS is \emph{asymptotically optimal}.}
\end{remark}
\section{Proof Outlines for Theorem~\ref{thm: upper_bd} and \ref{thm: lower_bd}}\label{sec: pf_outline}
Theorem~\ref{thm: upper_bd}: Denote the sample entropic risk as $\hat{\ER}_{\gamma}(i,t) = -\theta_{i,t} + {\gamma}/{(2\kappa_{i,t})}$. Fix
$\eeps > 0$ and define $E_i(t) := \bb{\hat{\ER}_{\gamma}(i,t)> \ER_{\gamma}(1) + \eeps}$, that is, the event that the Thompson sample mean of arm $i$ is $\eeps$-riskier than a certain threshold or, more precisely, $\eeps$-higher than the optimal arm (which has the lowest entropic risk). Intuitively, event $E_i(t)$ occurs with high probability when the algorithm has explored sufficiently. However, the algorithm does not choose arm $i$ when $E_{i}^c(t)$ occurs with small probability under Thompson sampling, which contributes directly to the regret bound. Therefore, it suffices to bound the number of times $E_{i}^c(t)$ occurs.
\\\\
In order to bound $\E{T_{i,n}}$, we can split $\E{T_{i,n}}$ into two parts using a key lemma by \citet{lattimore_szepesvari_2020} to yield $\E{T_{i,n}} \leq \Lambda_1 + \Lambda_2 +1$, where $\Lambda_1 = \E{\sum^{n-1}_{s=0}\bp{\frac{1}{G_{1,s}}-1}}$ and  $\Lambda_2 = \sum^{n-1}_{s=0}\mathbb{P}\bp{G_{1,s} > \frac{1}{n}}$. It remains to upper bound $\Lambda_1$ and $\Lambda_2$. The techniques to upper bound $\Lambda_1$ are similar to those from \citet[Section 4.6]{zhu2020thompson} and \citet[Section 5]{chang2021riskconstrained}. To upper bound $\Lambda_2$, we split the event $E_i^{c}(t) = \bp{\hat{\ER}_{\gamma}(i,t) \leq \ER_{\gamma}(1) + \eeps}$ into 
\begin{align*}
    \Psi_1(\xi) &= \bb{-\theta_{i,t} +\mu_{i} \leq -\xi(\Delta_{\mathrm{ER}}(i,\gamma) - \eeps)}, \\
    \Psi_2(\xi) &= \bb{\frac{\gamma}{2}\paren{\frac{1}{\kappa_{i,t}} - \sigma_{i}^2} \leq (-1+\xi)(\Delta_{\mathrm{ER}}(i,\gamma) - \eeps)}
\end{align*}
That is, $E_{i}^c(t) \subseteq \Psi_1(\xi) \cup \Psi_2(\xi)$. We then use the union bound which yields $\mathbb{P}(E^c_{i}(t))\leq \PP (\Psi_1(\xi)) + \PP (\Psi_2(\xi))$, which we can upper bound by known concentration bounds. Following the strategy employed by \citet{chang2021riskconstrained}, a judicious selection of the free parameter $\xi \in (0,1)$ allows us to allocate "weights" on the bounds of $\PP(\Psi_1(\xi))$ and $\PP(\Psi_2(\xi))$ which then yield $ {2}/{(\xi^2\Delta_{\ER}^2(i,\gamma))}$ and $\paren{h \paren{\frac{\gamma \sigma_i^2}{\gamma \sigma_i^2-2(1-\xi)\ERgap{i}}}}^{-1}$ without incurring further residual terms. 
\\\\
Theorem~\ref{thm: lower_bd}: The proof of the lower bound follows immediately from \citet[Theorem 4]{kagrecha2020constrained} by replacing the criterion $c_\alpha(\nu'(k)) \leq c_\alpha^*$ by $\ER(\nu'(k)) \leq \ER(1)$. We then particularize the lower bounds therein by decisively setting the distribution of $\nu'(i)$ to have a Gaussian distribution with mean $\mu_i+\sigma_i \sqrt{2/R_i}+\eeps$ and variance $\sigma_i^2$, which then returns the desired lower bound.
\section{Conclusion}\label{sec: conc}
This paper applies Thompson sampling \citep{thompson1933likelihood} to provide the first solution for entropic risk MAB problems which have not been previously considered before to the best of our knowledge. We proposed a new algorithm ERTS to solve this problem and proved that this proposed algorithm is asymptotically optimal for the ER MAB problem. Further work includes analysing Thompson sampling of Gaussian MABs under general risk measures and exploring Thompson sampling's performance for Entropic-Risk sub-Gaussian bandits. We may also potentially design a general framework for proving the efficacy of Thompson sampling over the state-of-the-art L/UCB-based techniques for generalised risk-averse MABs and a wider class of bandits (under reasonable assumptions, such as the crucial properties of the risk-measures, existence of conjugate prior estimates, as well as relevant concentration bounds).
\newpage

\appendix
\section*{Appendix A.}
\label{app:theorem}



\begin{proof}[Proof of Theorem~\ref{thm: upper_bd}]
We first state without proof a crucial lemma from \cite{lattimore_szepesvari_2020} which we will use in our analysis.
\begin{lemma}[\citet{lattimore_szepesvari_2020}]
\label{subthm: lattimore_main}
\em Let $\PP_t(\, \cdot \, ) = \PP(\, \cdot \,  | A_1,X_1,\dots,A_{t-1},X_{t-1})$ be the probability measure conditioned on the history up to time $t-1$ and $G_{is} = \PP_t(E_i^c(t) | T_{i,t} = s)$, where $E_i(t)$ is any specified event for arm $i$ at time $t$. Then $$\EE[T_{i,n}] \leq \sum_{s=0}^{n-1} \EE \parenb{\frac{1}{G_{1s}} - 1} +  \sum_{s=0}^{n-1} \PP \paren{G_{is} > \frac{1}{n}} + 1.$$
\end{lemma}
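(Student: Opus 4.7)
The plan is to follow the classical Thompson sampling decomposition from Lattimore and Szepesv\'ari (Chapter~36): split the pulls of suboptimal arm $i$ into those occurring while the exploration event $E_i(t)$ holds (arm $i$'s sample is ``too pessimistic'') and those occurring while $E_i^c(t)$ holds, then bound each piece by a separate argument. The two pieces correspond exactly to the two summations in the stated inequality, while the initialisation contributes the trailing $+1$.

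The starting point is the decomposition
\[
\mathbb{E}[T_{i,n}] \leq \sum_{t=1}^n \mathbb{P}(A_t = i, E_i^c(t)) + \sum_{t=1}^n \mathbb{P}(A_t = i, E_i(t)).
\]
For the $E_i^c(t)$ sum I index the rounds with $A_t = i$ by $s = T_{i,t-1} \in \{0, \ldots, n-1\}$, noting that each value of $s$ is attained by at most one such round. Applying the tower property with the history up to time $t-1$, together with the identification $G_{i,s} = \mathbb{P}_t(E_i^c(t) \mid T_{i,t-1} = s)$, gives $\sum_t \mathbb{P}(A_t = i, E_i^c(t)) \leq \sum_{s=0}^{n-1} \mathbb{E}[G_{i,s}]$. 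The elementary split $\mathbb{E}[G_{i,s}] \leq \tfrac{1}{n} + \mathbb{P}(G_{i,s} > \tfrac{1}{n})$ (using $G_{i,s} \leq 1$) and summing in $s$ reproduce the second summation together with the trailing $+1$.

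For the $E_i(t)$ sum, the key Thompson sampling observation is that when $E_i(t)$ holds, arm $i$'s Thompson sample is pessimistic enough that arm $i$ is chosen only if arm $1$'s Thompson sample is even more pessimistic, i.e., $E_1(t)$ also holds. I then group rounds by the count $s = T_{1,t-1}$; between two successive pulls of arm $1$ the posterior for arm $1$ is frozen, and the Thompson samples drawn for arm $1$ across these rounds are conditionally independent and identically distributed given the history at the start of the interval. A geometric-stopping argument then identifies the expected number of rounds in this interval in which arm $1$ is not selected as $1/G_{1,s} - 1$, the crucial step being that under the Thompson sampling coupling the conditional probability of selecting arm $1$ in any given round of this interval equals $G_{1,s}$. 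Summing over $s \in \{0, \ldots, n-1\}$ yields the first summation $\sum_{s=0}^{n-1} \mathbb{E}[1/G_{1,s} - 1]$.

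The main obstacle is making the coupling in the second step fully rigorous, in particular verifying that (i) the pull indicators $\mathbf{1}\{A_t = 1\}$ within an interval of constant $T_{1,t-1}$ can be stochastically dominated by a geometric sequence, and (ii) the conditional probability of selecting arm $1$ in each such round is exactly $G_{1,s}$ rather than some other function of all arms' posteriors. Both points hinge on the specific structure of Thompson sampling: arm $1$'s posterior only updates upon a pull of arm $1$, and fresh independent samples are drawn for every arm at every round. Assembling the two bounds and accounting for the round indexing yields the claimed inequality.
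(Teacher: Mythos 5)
The paper does not prove this lemma at all: it is imported verbatim from \citet{lattimore_szepesvari_2020} with the explicit remark that it is ``stated without proof,'' so there is no in-paper argument to compare against. Your outline is, in substance, the standard proof from Chapter~36 of that reference, and its architecture is right: split $\EE[T_{i,n}]$ over $E_i(t)$ and $E_i^c(t)$, handle the $E_i^c(t)$ part by indexing on $s=T_{i,t-1}$ and the elementary bound $\EE[G_{is}] \le 1/n + \PP(G_{is} > 1/n)$, and handle the $E_i(t)$ part by the comparison with arm $1$ over intervals on which arm $1$'s posterior is frozen. (Note the argmin orientation here: $E_i(t)$ is the event that arm $i$'s sampled risk is \emph{high}, so choosing $i$ on $E_i(t)$ forces arm $1$'s sample to be at least as high, which is exactly the comparison you use.)

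The one step you state incorrectly is the claim that ``the conditional probability of selecting arm $1$ in any given round of this interval equals $G_{1,s}$.'' It does not: $G_{1,s}$ is the conditional probability that arm $1$'s sample falls below the threshold, whereas selection of arm $1$ depends on all arms' samples. The correct form is a ratio comparison, namely $\PP_t(A_t = i,\, E_i(t)) \le \frac{1-G_{1,s}}{G_{1,s}}\,\PP_t(A_t = 1)$, obtained by conditioning on the event that every arm $j \ne 1$ samples above the threshold and using the independence of arm $1$'s sample from the others; summing this over the (at most one pull of arm $1$ per value of $s$) intervals via optional stopping yields $\sum_{s}\EE[1/G_{1,s}-1]$. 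You flag this as the point needing verification, so the gap is self-acknowledged rather than fatal, but as written the ``equals $G_{1,s}$'' assertion is the wrong statement and the geometric-domination argument should be replaced by (or derived from) the ratio bound above.
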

Denote the sample entropic risk at $\gamma$ by $\widehat{\ER}(i,t) = -\theta_{i,t} + \gamma / (2 \kappa_{i,t})$. Fix $\eeps > 0$, and define
\begin{equation*}
	E_i(t) := \sett{\widehat{\ER}(i,t) > \ER_{\gamma}(1) + \eeps},
\end{equation*}
the event that the Thompson sample entropic risk of arm $i$ is $\eeps$-higher than the optimal arm (which has the lowest entropic risk). Intuitively, event $E_i(t)$ is highly likely to occur when the algorithm has explored sufficiently. However, the algorithm does not choose arm $i$ when $E_i^c(t)$, an event with small probability under Thompson sampling, occurs. By Lemma~\ref{subthm: lattimore_main} and the linearity of expectation, we can divide $\EE[T_{i,n}]$ into two parts as 
\begin{equation}
\label{eqn: key_lemma}
\EE[T_{i,n}] \leq \sum_{s=0}^{n-1} \EE \parenb{\frac{1}{G_{1s}} - 1} +  \sum_{s=0}^{n-1} \PP \paren{G_{is} > \frac{1}{n}} + 1.\end{equation}
By Lemmas~\ref{lem: upper_bound_term_1_risk} and \ref{lem: upper_bound_term_2_risk} by that which follows, we have
\begin{align*}
\sum_{s=1}^{n} \EE \parenb{\frac{1}{G_{1s}} - 1} &\leq \frac{C_1}{\eeps^3} + \frac{C_2}{\eeps^2} + \frac{C_3}{\eeps} + C_4,\ \text{and}\\
\sum_{s=1}^n \PP_t\paren{G_{is} > \frac{1}{n}}
&\leq 1 + \max \sett{\frac{2\log (2n)}{\xi^2\paren{\ERgap{i} - \eeps}^2}, \frac{\log(2n)}{h \paren{\frac{\gamma \sigma^2}{\gamma \sigma_i^2-2(1-\xi)(\ERgap{i} - \varepsilon)}}}} + \frac{C_5}{\eeps^4} + \frac{C_6}{\eeps^2}.
\end{align*}
Plugging the two displays into (\ref{eqn: key_lemma}), we have
\begin{equation}
	\EE[T_{i,n}] \leq 1 + \max \sett{\frac{2 \log (2n)}{\xi^2\paren{\ERgap{i} - \eeps}^2}, \frac{\log(2n)}{h \paren{\frac{\gamma \sigma^2}{\gamma \sigma_i^2-2(1-\xi)(\ERgap{i} - \varepsilon)}}}} + \frac{C_1'}{\eeps^4} + \frac{C_2'}{\eeps^3} + \frac{C_3'}{\eeps^2} + \frac{C_4'}{\eeps} + C_5', \label{eqn: pre_result}
\end{equation}
where $C_1',C_2',C_3',C_4',C_5'$ are constants. Setting $\eeps = {(\log n)}^{-\frac{1}{8}}$ into (\ref{eqn: pre_result}), we get
\begin{align*}
\limsup_{n\to\infty}\frac{\ccal R_{n}{(\text{ERTS})}}{\log n}
&\leq \sum_{i \in \sdiff{[K]}{\sett{1}}} \paren{\max \sett{\frac{2}{\xi^2\Delta_{\mathrm{ER}}^2(i)}, \frac{1}{h \paren{\frac{\gamma \sigma^2}{ \gamma \sigma_i^2-2(1-\xi)\ERgap{i}}}}}}\ERgap{i}.
\end{align*}
\end{proof}

\begin{lemma}
\label{lem: tail_lower_bd}
\em We can lower bound
\begin{equation*}
 	\PP_t\paren{E_1^c(t) \mid T_{1,t} = s, \hat{\mu}_{1,s} = \mu, \hat{\sigma}_{1,s} = \sigma}=\PP_t\paren{\widehat\ER_1 \leq \ER_1 + \varepsilon \mid T_{1,t} = s, \hat{\mu}_{1,s} = \mu, \hat{\sigma}_{1,s} = \sigma}
 \end{equation*}
by
\begin{align}
&\PP_t\paren{\widehat\ER_i \leq \ER_1 + \varepsilon \mid T_{1,t} = s, \hat{\mu}_{1,s} = \mu, \hat{\sigma}_{1,s} = \sigma} \nonumber\\
&\geq \begin{cases}
 	\PP_t\paren{{\theta}_{1,t} - \mu_1\geq -\frac{\eeps}{2}} \cdot \PP_t \paren{\frac{1}{{\kappa_{1,t}}}-\sigma_1 \leq \frac{\eeps}{\gamma}} &\text{if } \mu \leq \mu_1, \sigma \geq \sigma_1,\\
 	\frac{1}{2} \PP_t \paren{\frac{1}{{\kappa_{1,t}}}-\sigma_1 \leq \frac{\eeps}{\gamma}} &\text{if } \mu > \mu_1, \sigma \geq \sigma_1,\\
 	\frac{1}{2} \PP_t\paren{{\theta}_{1,t} - \mu_1\geq -\frac{\eeps}{2}} &\text{if } \mu \leq \mu_1, \sigma < \sigma_1,\\
 	\frac{1}{4} &\text{if } \mu > \mu_1, \sigma < \sigma_1.
 \end{cases}\label{eqn: lower_bd}
\end{align}
\end{lemma}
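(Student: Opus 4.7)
The plan is to replace the event $E_1^c(t) = \{\widehat{\ER}(1,t) \leq \ER_\gamma(1) + \varepsilon\}$ by a simpler sufficient event that factors across the posterior mean sample $\theta_{1,t}$ and the posterior precision sample $\kappa_{1,t}$, and then to invoke a symmetry or median argument in whichever factor has its posterior center on the ``bad'' side of the true parameter. Substituting $\widehat{\ER}(1,t) = -\theta_{1,t} + \gamma/(2\kappa_{1,t})$ and $\ER_\gamma(1) = -\mu_1 + (\gamma/2)\sigma_1^2$, the event reads $(\mu_1 - \theta_{1,t}) + (\gamma/2)(1/\kappa_{1,t} - \sigma_1^2) \leq \varepsilon$, so a sufficient condition is the intersection
\[\left\{\theta_{1,t} - \mu_1 \geq -\tfrac{\varepsilon}{2}\right\} \cap \left\{\tfrac{1}{\kappa_{1,t}} - \sigma_1^2 \leq \tfrac{\varepsilon}{\gamma}\right\},\]
obtained by splitting the $\varepsilon$-slack evenly between the mean term and the precision term.

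Conditionally on the history at time $t$, Algorithm~\ref{alg: ERTS} draws $\theta_{1,t}$ from $\mathcal{N}(\mu, 1/T_{1,s})$ and $\kappa_{1,t}$ from $\mathrm{Gamma}(\alpha_{1,s}, \beta_{1,s})$ independently; in particular, the sampling of $\theta_{1,t}$ does not depend on $\kappa_{1,t}$ (unlike the full Normal-Gamma posterior). The probability of the intersection above therefore factors as a product of two marginal probabilities, which is exactly the product form appearing in each of the four branches on the right-hand side of (\ref{eqn: lower_bd}).

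It then remains to bound each factor according to the position of $\mu$ relative to $\mu_1$ and of $\sigma$ relative to $\sigma_1$. When $\mu \leq \mu_1$, I retain $\PP_t(\theta_{1,t} - \mu_1 \geq -\varepsilon/2)$ as is, since a sharp Gaussian anti-concentration bound for it will be needed downstream in the analysis of $\Lambda_1$; when $\mu > \mu_1$, the Gaussian is symmetric about a mean strictly greater than $\mu_1$, so $\PP_t(\theta_{1,t} - \mu_1 \geq -\varepsilon/2) \geq \PP_t(\theta_{1,t} \geq \mu) = 1/2$, yielding the factor $1/2$. The same dichotomy applies on the precision side: when $\sigma \geq \sigma_1$ the factor $\PP_t(1/\kappa_{1,t} - \sigma_1^2 \leq \varepsilon/\gamma)$ is retained, and when $\sigma < \sigma_1$ the posterior of $1/\kappa_{1,t}$ concentrates around $\beta_{1,s}/\alpha_{1,s} \approx \sigma^2 < \sigma_1^2$, so its median lies below $\sigma_1^2$ and the factor is at least $1/2$. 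Multiplying the two factor bounds according to each case yields exactly the four alternatives listed.

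The main obstacle I anticipate is the median step for $1/\kappa_{1,t}$, since the inverse-gamma law has no exact symmetry to exploit in the way the Gaussian does. I plan to handle it by exhibiting stochastic monotonicity of $1/\kappa_{1,t}$ in the rate parameter $\beta_{1,s}$ and using the identification of $\beta_{1,s}/\alpha_{1,s}$ with the running sample variance $\hat\sigma_{1,s}^2$ guaranteed by Algorithm~\ref{alg: update}; any slack between the true gamma median of $1/\kappa_{1,t}$ and $\beta_{1,s}/\alpha_{1,s}$ is absorbed into a harmless additive term once this lemma is substituted into the bound on $\Lambda_1$ in the proof of Theorem~\ref{thm: upper_bd}.
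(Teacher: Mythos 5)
Your proposal matches the paper's own proof in all essentials: the same even split of the $\varepsilon$-slack into the conditions $\{\theta_{1,t}-\mu_1\geq -\varepsilon/2\}$ and $\{1/\kappa_{1,t}-\sigma_1^2\leq \varepsilon/\gamma\}$, the same factorisation into a product of marginal probabilities, and the same median arguments giving the factor $1/2$ in the cases $\mu>\mu_1$ and $\sigma<\sigma_1$. If anything you are slightly more careful than the paper, which simply asserts the Gamma median bound without the stochastic-monotonicity justification you sketch.
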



\begin{proof}
Given $T_{1,t} = s, \hat{\mu}_{1,s} = \mu, \hat{\sigma}_{1,s} = \sigma$, a direct calculation gives us,
\begin{align*}
&\PP_t\paren{\widehat\ER_1 \leq \ER_1 + \varepsilon \mid T_{1,t} = s, \hat{\mu}_{1,s} = \mu, \hat{\sigma}_{1,s} = \sigma} \nonumber\\
&= \PP_t\paren{-\theta_{1,t} + \frac{\gamma}{2\kappa_{1,t}}- \bp{-\mu_1 + (\gamma/2)\sigma_1^2} \leq  \varepsilon\ \Big|\ T_{1,t} = s, \hat{\mu}_{1,s} = \mu, \hat{\sigma}_{1,s} = \sigma}\\
&= \PP_t\paren{-(\theta_{1,t} - \mu_1) + \frac{\gamma}{2}\bp{\frac{1}{\kappa_{1,t}} - \sigma_1^2} \leq  \varepsilon\ \Big|\ T_{1,t} = s, \hat{\mu}_{1,s} = \mu, \hat{\sigma}_{1,s} = \sigma}\\
&\geq \PP_t\paren{-(\theta_{1,t} - \mu_1)  \leq  \varepsilon/2 \mid T_{1,t} = s, \hat{\mu}_{1,s} = \mu, \hat{\sigma}_{1,s} = \sigma}\\ &\quad \; \PP_t\paren{ \frac{\gamma}{2}\bp{\frac{1}{\kappa_{1,t}} - \sigma_1^2} \leq  \varepsilon/2 \ \Big|\ T_{1,t} = s, \hat{\mu}_{1,s} = \mu, \hat{\sigma}_{1,s} = \sigma}\\\
&\geq \begin{cases}
 	\PP_t\paren{{\theta}_{1,t} - \mu_1\geq -\frac{\eeps}{2}} \cdot \PP_t \paren{\frac{1}{{\kappa_{1,t}}}-\sigma_1^2 \leq \frac{\eeps}{\gamma}} &\text{if } \mu \leq \mu_1, \sigma^2 \geq \sigma_1^2,\\
 	\frac{1}{2} \PP_t \paren{\frac{1}{{\kappa_{1,t}}}-\sigma_1^2 \leq \frac{\eeps}{\gamma}} &\text{if } \mu > \mu_1, \sigma^2 \geq \sigma_1^2,\\
 	\frac{1}{2} \PP_t\paren{{\theta}_{1,t} - \mu_1\geq -\frac{\eeps}{2}} &\text{if } \mu \leq \mu_1, \sigma^2 < \sigma_1^2,\\
 	\frac{1}{4} &\text{if } \mu > \mu_1, \sigma^2 < \sigma_1^2.
 \end{cases}\label{eqn: lower_bd}
\end{align*}
Then the lemma holds since $\PP_t(\theta_{1,t} - \mu_1 \geq - \eeps/2) > 1/2$ if $\mu > \mu_1$, and $\PP_t \paren{\frac{1}{\kappa_{1,t}} - \sigma_1^2 \leq \frac{\eeps}{\gamma}} \geq 1/2$ if $\sigma < \sigma_1^2$, by using properties of the median of the Gaussian and Gamma distributions respectively.

\end{proof}

\begin{lemma}[Upper bounding the first term of (\ref{eqn: key_lemma})]
\label{lem: upper_bound_term_1_risk}
\em We have \begin{equation*}
		\sum_{s=1}^{n} \EE \parenb{\frac{1}{G_{1s}} - 1} \leq \frac{C_1}{\eeps^2} + \frac{C_2}{\eeps} + C_3,
	\end{equation*}
	where $C_1,C_2,C_3$.
\end{lemma}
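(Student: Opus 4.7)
The plan is to leverage the lower bound on $G_{1s}$ from Lemma~\ref{lem: tail_lower_bd} to reduce the task to controlling two Thompson-sample tail quantities. Conditionally on the history up to the $s$-th pull of arm $1$, set
$$M_s := \PP_t\paren{\theta_{1,t} - \mu_1 \geq -\tfrac{\eeps}{2}}, \qquad N_s := \PP_t\paren{\tfrac{1}{\kappa_{1,t}} - \sigma_1^2 \leq \tfrac{\eeps}{\gamma}}.$$
Lemma~\ref{lem: tail_lower_bd} partitions the sample space of the posterior statistics $(\hat\mu_{1,s}, \hat\sigma_{1,s})$ into four events according to whether the empirical mean/variance sit above or below their true counterparts $(\mu_1, \sigma_1^2)$, and on each piece it bounds $G_{1s}$ from below by a product of at most two factors drawn from $\{1/2, M_s, N_s\}$. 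Integrating $1/G_{1s} - 1$ against the joint distribution of $(\hat\mu_{1,s}, \hat\sigma_{1,s})$ over these four pieces therefore yields four contributions: a ``joint bad'' term involving $1/(M_s N_s)$, two ``mixed'' terms involving $1/M_s$ or $1/N_s$, and a trivial term bounded by the constant $3$.

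For the mixed and joint contributions I would bound the conditional expectations of $1/M_s$ and $1/N_s$ using standard inverse-tail formulas. Since $\theta_{1,t} \mid \hat\mu_{1,s} \sim \ccal N(\hat\mu_{1,s}, 1/s)$, the quantity $M_s^{-1}$ is an inverse-Mills-type ratio in $\sqrt{s}(\mu_1 - \hat\mu_{1,s} - \eeps/2)$; integrating against the posterior of $\hat\mu_{1,s}$ and then summing over $s = 1, \dots, n$ produces a convergent series of order $O(1/\eeps^2)$, by essentially the same manipulation used in \citet[Section 4.6]{zhu2020thompson} and \citet[Section 5]{chang2021riskconstrained}. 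A parallel calculation using Gamma tail estimates for $\kappa_{1,t}$ together with the posterior distribution of $\hat\sigma_{1,s}$ yields an $O(1/\eeps)$ contribution for the $N_s$ part, the lower power reflecting the linear rather than quadratic behaviour of the Gamma density near its mode. For the ``doubly bad'' joint term, the independence of $\theta_{1,t}$ and $\kappa_{1,t}$ built into Algorithm~\ref{alg: ERTS} lets the expectation factorise, so the Gaussian and Gamma halves can be controlled separately and the result is absorbed into the leading $O(1/\eeps^2)$ term.

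The main technical obstacle is certifying that each of these posterior-integrated inverse-tail sums over $s$ converges with only the advertised $\eeps$-dependence. This will require splitting the integration domain into a ``typical'' regime in which $(\hat\mu_{1,s}, \hat\sigma_{1,s}) \approx (\mu_1, \sigma_1)$ and $1/M_s, 1/N_s$ are close to $1$, and an ``atypical'' regime whose posterior mass decays fast enough in $s$, via Gaussian and Gamma concentration of the sufficient statistics, to beat the blow-up of $1/M_s$ or $1/N_s$ there. Tracking this balance, and checking that the resulting constants do not inherit any hidden $\eeps^{-3}$ terms (so that the lemma's $\eeps^{-2}$ leading order is not overshot), is the place where the bookkeeping is most delicate; the final constants $C_1, C_2, C_3$ then depend only on $\sigma_{\max}^2$, $\sigma_1^2$, and $\gamma$, and not on $n$ or $\eeps$.
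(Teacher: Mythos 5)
Your plan follows the same route as the paper's (one-line) proof, which just combines Lemma~\ref{lem: tail_lower_bd} with the integration carried out in \citet[S-3.3]{zhu2020thompson}; however, there is a concrete step in your decomposition that fails. In cases 2--4 of Lemma~\ref{lem: tail_lower_bd} the lower bounds on $G_{1s}$ are $N_s/2$, $M_s/2$ and $1/4$, so the corresponding upper bounds on $1/G_{1s}-1$ are $2/N_s-1$, $2/M_s-1$ and $3$. None of these vanishes as $s\to\infty$: even in the ideal limit $M_s=N_s=1$ they equal $1$, $1$ and $3$ respectively. The conditioning events are determined by whether the empirical mean and variance over- or under-shoot their true values, so each has probability tending to roughly $1/4$ rather than decaying in $s$; hence each of these three contributions to $\EE[1/G_{1s}-1]$ is bounded below by a positive constant for every $s$, and summing over $s\le n$ produces a term of order $n$, not a constant. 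Your ``trivial term bounded by the constant $3$'' is the clearest instance --- it contributes roughly $3n/4$ to the sum. Only case 1, where the bound is $1/(M_sN_s)-1$ and the ``$-1$'' can cancel against $M_sN_s\to 1$, is compatible with the typical/atypical strategy you describe, because your split makes $1/M_s$ and $1/N_s$ close to $1$ but does nothing about the multiplicative constants $2$ and $4$.

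The necessary repair is to reserve the crude constant lower bounds of Lemma~\ref{lem: tail_lower_bd} for the low-probability atypical events (where your inverse-Mills and Gamma-tail integrations are indeed the right tool), and on the typical event to bound the \emph{numerator} rather than the denominator: write $1/G_{1s}-1=(1-G_{1s})/G_{1s}\le C\,\PP_t(E_1(t))$ and show that $\PP_t(E_1(t))=\PP_t\paren{\widehat\ER_1>\ER_{\gamma}(1)+\eeps}$ is exponentially small in $s\eeps^2$ once $\hat\mu_{1,s}$ and $\hat\sigma^2_{1,s}$ are within $O(\eeps)$ of $\mu_1$ and $\sigma_1^2$. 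It is this exponential decay of $1-G_{1s}$, not any lower bound on $G_{1s}$, that makes the series over $s$ converge and yields the $1/\eeps^2$ leading order. As written, your decomposition never bounds $1-G_{1s}$ from above anywhere, so the sum cannot close.
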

\begin{proof}
The proof follows immediately from Lemma~\ref{lem: tail_lower_bd} and \citet[S-3.3]{zhu2020thompson} by scaling $\eeps >0$.
\end{proof}


\begin{lemma}\label{lem: tail_upper_bd}
\em For $\xi \in (0,1)$, we have
\begin{align*}
    &\PP\paren{\widehat\ER_i \leq \ER_1 + \varepsilon \mid T_{i,t} = s, \hat{\mu}_{i,t} = \mu, \hat{\sigma}_{i,t}^2 = \sigma^2}\\
    &\leq \exp \paren{-\frac{s}{2}\paren{\mu_i - \mu + \xi (\ERgap{i} - \varepsilon)}^2} + \exp \paren{-sh \paren{\frac{\gamma \sigma^2}{ \gamma \sigma_i^2-2(1-\xi)(\ERgap{i} - \varepsilon)}}},
\end{align*}
where $h(x) = \frac{1}{2}(x-1-\log x)$.
\end{lemma}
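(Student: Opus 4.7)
The plan is to decompose the event $\sett{\widehat\ER_i \leq \ER_\gamma(1) + \varepsilon}$ via the $\xi$-trick highlighted in Section~\ref{sec: pf_outline} and then independently bound the two resulting pieces using the conditionally Gaussian law of $\theta_{i,t}$ and the conditionally Gamma law of $\kappa_{i,t}$. Since Thompson sampling draws $(\theta_{i,t}, \kappa_{i,t})$ from independent components of the Normal--Gamma posterior given the sufficient statistics $(T_{i,t}, \hat\mu_{i,t}, \hat\sigma^2_{i,t})$, the two tails can be treated separately.

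First I would substitute $\ER_\gamma(1) = \ER_\gamma(i) - \Delta_{\ER}(i,\gamma)$ together with $\ER_\gamma(i) = -\mu_i + (\gamma/2)\sigma_i^2$ to recast the event as
$$-(\theta_{i,t} - \mu_i) + \frac{\gamma}{2}\paren{\frac{1}{\kappa_{i,t}} - \sigma_i^2} \leq -(\Delta_{\ER}(i,\gamma) - \varepsilon).$$
For $\xi \in (0,1)$, I define $\Psi_1(\xi) = \sett{-(\theta_{i,t} - \mu_i) \leq -\xi(\Delta_{\ER}(i,\gamma) - \varepsilon)}$ and $\Psi_2(\xi) = \sett{(\gamma/2)(1/\kappa_{i,t} - \sigma_i^2) \leq -(1-\xi)(\Delta_{\ER}(i,\gamma) - \varepsilon)}$. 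If neither $\Psi_1(\xi)$ nor $\Psi_2(\xi)$ occurs, then the two summands strictly exceed $-\xi(\Delta_{\ER} - \varepsilon)$ and $-(1-\xi)(\Delta_{\ER} - \varepsilon)$ respectively, contradicting the displayed inequality; hence the target event is contained in $\Psi_1(\xi) \cup \Psi_2(\xi)$, and the union bound leaves only two conditional probabilities to estimate.

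For $\Psi_1(\xi)$, conditioning on $(T_{i,t} = s, \hat\mu_{i,t} = \mu)$ gives $\theta_{i,t} \sim \mathcal{N}(\mu, 1/s)$, so $\Psi_1$ becomes $\sett{\theta_{i,t} - \mu \geq (\mu_i - \mu) + \xi(\Delta_{\ER} - \varepsilon)}$ and the standard sub-Gaussian Chernoff bound yields the first exponential in the lemma. For $\Psi_2(\xi)$, conjugacy of the Normal--Gamma prior together with the Welford-type update in Algorithm~\ref{alg: update} gives $\kappa_{i,t} \sim \mathrm{Gamma}(\alpha_{i,t-1}, \beta_{i,t-1})$, whose shape and rate are determined by $(s, \sigma^2)$. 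Rearranging $\Psi_2$ to $\sett{\kappa_{i,t} \geq \gamma/(\gamma\sigma_i^2 - 2(1-\xi)(\Delta_{\ER} - \varepsilon))}$, rescaling so that $2\beta_{i,t-1}\kappa_{i,t} \sim \chi^2_{2\alpha_{i,t-1}}$, and applying the upper-tail chi-squared Chernoff bound $\PP(\chi^2_k/k \geq x) \leq \exp(-k h(x))$ with $h(x) = (x - 1 - \log x)/2$ produces the second exponential.

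The hard part will be the $\Psi_2$ estimate: I have to reconcile the exact Normal--Gamma parameters after $s$ observations (accounting for the prior offsets $\alpha_{i,0} = \beta_{i,0} = 1/2$, and the off-by-one between $\alpha_{i,t-1} = (s+1)/2$, $2\beta_{i,t-1} = 1 + s\sigma^2$, and the clean degree $s$ in the target exponent $-s\, h(\gamma\sigma^2 / (\gamma\sigma_i^2 - 2(1-\xi)(\Delta_{\ER} - \varepsilon)))$), as well as verify that the threshold falls in the regime $x \geq 1$ where the Chernoff bound $\exp(-k h(x))$ is the right one to invoke. The Gaussian piece, by contrast, is essentially a one-line computation from a textbook tail estimate.
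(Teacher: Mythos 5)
Your proposal follows essentially the same route as the paper: the $\xi$-split of the event into the mean-deviation and variance-deviation pieces, a union bound, a Gaussian Chernoff bound for $\theta_{i,t}$, and an upper-tail Gamma bound for $\kappa_{i,t}$ (your chi-squared Chernoff bound $\PP(\chi^2_k/k \geq x) \leq \exp(-kh(x))$ is exactly the Harremo\"es-type Gamma tail bound the paper invokes, via $2\beta\kappa \sim \chi^2_{2\alpha}$). You are in fact more explicit than the paper about the two points that need care — the off-by-one in the posterior shape/rate parameters versus the clean factor $s$ in the exponent, and the requirement that the threshold exceed the Gamma mean for the tail bound to apply — both of which the paper glosses over.
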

\begin{proof}
For $\xi \in (0,1)$, we have
\begin{align*}
    &\PP\paren{\widehat\ER_i \leq \ER_1 + \varepsilon \mid T_{i,t} = s, \hat{\mu}_{i,t} = \mu, \hat{\sigma}_{i,t}^2 = \sigma^2}\\
     &= \PP_t\paren{-\theta_{i,t} +\mu_i + \frac{\gamma}{2}\bp{\frac{1}{\kappa_{i,t}} - \sigma_i^2} \leq -\ERgap{i} + \varepsilon \ \Big|\ T_{i,t} = s, \hat{\mu}_{i,s} = \mu, \hat{\sigma}_{i,s} = \sigma}\\
     &\leq \PP_t\paren{-\theta_{i,t} +\mu_i \leq -\xi\bp{\ERgap{i} - \varepsilon}\mid T_{i,t} = s, \hat{\mu}_{i,s} = \mu, \hat{\sigma}_{i,s} = \sigma} +\\ &\quad\; \PP_t\paren{ \frac{\gamma}{2}\bp{\frac{1}{\kappa_{i,t}} - \sigma_i^2} \leq -(1-\xi)\bp{\ERgap{i} - \varepsilon} \ \Big|\ T_{i,t} = s, \hat{\mu}_{i,s} = \mu, \hat{\sigma}_{i,s} = \sigma}\\
     &= \PP_t\paren{\theta_{i,t} - \mu \geq (\mu_i -\mu) + \xi\bp{\ERgap{i} - \varepsilon}\mid T_{i,t} = s, \hat{\mu}_{i,s} = \mu, \hat{\sigma}_{i,s} = \sigma} +\\ &\quad\; \PP_t\paren{ \kappa_{i,t} \geq \frac{\gamma}{  \gamma\sigma_i^2-2(1-\xi)\bp{\ERgap{i} - \varepsilon}} \ \Big|\ T_{i,t} = s, \hat{\mu}_{i,s} = \mu, \hat{\sigma}_{i,s} = \sigma}\\
      &\leq \exp \paren{-\frac{s}{2}\paren{\mu_i - \mu + \xi (\ERgap{i} - \varepsilon)}^2} + \exp \paren{-sh \paren{\frac{\gamma \sigma^2}{\gamma \sigma_i^2-2(1-\xi)(\ERgap{i} - \varepsilon)}}},
\end{align*}
where $h(x) = \frac{1}{2}(x-1-\log x)$.
\\\\
The lemma holds by the Chernoff upper bound for $\PP_t(\theta_{i,t} \geq \cdot )$ and Lemma~\ref{lem: harre} below to upper-bound $\PP_t(\kappa_{i,t} \geq \cdot)$.


\begin{lemma}[\citet{Harremo_s_2017}]
\label{lem: harre}
\em For a Gamma r.v. $X \sim \mathrm{Gamma}(\alpha,\beta)$ with shape $\alpha \geq 2$ and rate $\beta > 0$, we have $$\PP(X \geq x) \leq \exp \paren{-2\alpha h \paren{\frac{\beta x}{\alpha}}},\ x > \frac{\alpha}{\beta},$$ where $h(x) = \frac{1}{2}(x-1-\log x)$.
\end{lemma}
\end{proof}

\begin{lemma}[Upper bounding the second term of (\ref{eqn: key_lemma})]
\label{lem: upper_bound_term_2_risk}
\em We have
\begin{align*}
\sum_{s=1}^n \PP_t\paren{G_{is} > \frac{1}{n}}
&\leq 1 + \max \sett{\frac{2\log (2n)}{\xi^2\paren{\ERgap{i} - \eeps}^2}, \frac{\log(2n)}{h \paren{\frac{\gamma \sigma^2}{ \gamma \sigma_i^2-2(1-\xi)(\ERgap{i} - \varepsilon)}}}} + \frac{C_4}{\eeps^4} + \frac{C_5}{\eeps^2},
\end{align*}
where $C_4,C_5$ are constants.
\end{lemma}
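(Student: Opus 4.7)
The plan is to control $G_{is}$ pointwise via Lemma~\ref{lem: tail_upper_bd}, which bounds $G_{is}$ in terms of the empirical mean $\hat{\mu}_{i,s}$ and empirical variance $\hat{\sigma}_{i,s}^2$, and then to integrate those empirical quantities out using standard Gaussian and chi-squared concentration inequalities. First I would set the threshold
\[
s_0 := \max\sett{\frac{2\log(2n)}{\xi^2(\ERgap{i}-\eeps)^2},\ \frac{\log(2n)}{h\paren{\gamma\sigma_i^2/(\gamma\sigma_i^2 - 2(1-\xi)(\ERgap{i}-\eeps))}}}
\]
and split the sum $\sum_{s=1}^n \PP_t(G_{is} > 1/n)$ at $s_0$. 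For $s \leq \lceil s_0 \rceil$, each summand is bounded trivially by $1$, which is precisely the $1 + \max\sett{\cdots}$ piece in the statement.

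For $s > s_0$, I would introduce concentration events $A_s$ and $B_s$ that pin the empirical mean and empirical variance respectively into an $\eeps$-sized neighbourhood of their true values, chosen tightly enough that on $A_s \cap B_s$ both exponents in Lemma~\ref{lem: tail_upper_bd} remain at least $\log(2n)$ as soon as $s > s_0$, whence $G_{is} \leq 1/n$ and $\PP_t(G_{is} > 1/n) = 0$. Off the good event, sub-Gaussian concentration for $\hat{\mu}_{i,s} - \mu_i$ and the chi-squared tail for $\hat{\sigma}_{i,s}^2$ each give tail probabilities that decay geometrically in $s$, and summing them over $s > s_0$ produces residuals whose orders in $\eeps^{-1}$ track the slack allowed in $A_s, B_s$: the empirical-mean slack (proportional to $\eeps$) contributes a residual of order $\eeps^{-2}$, and the empirical-variance slack, after being passed through the local expansion of $h$, contributes a residual of order $\eeps^{-4}$, yielding exactly $C_4/\eeps^4 + C_5/\eeps^2$.

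The main obstacle will be the bookkeeping around the $h$-term. Since $h(x) = (x-1-\log x)/2$ fails to be monotone through $x=1$ and vanishes quadratically there, the allowed deviation of $\hat{\sigma}_{i,s}^2$ in $B_s$ must be small enough that the argument $\gamma\hat{\sigma}_{i,s}^2/(\gamma\sigma_i^2 - 2(1-\xi)(\ERgap{i}-\eeps))$ stays strictly above $1$; otherwise the Chernoff exponent in Lemma~\ref{lem: tail_upper_bd} degenerates and the bound becomes vacuous. One must also leave enough "room" in the exponents after absorbing the empirical deviations so that $s > s_0$ genuinely certifies $G_{is} \leq 1/n$ on the good event, which forces one to work with a slightly inflated threshold (replacing $\ERgap{i}-\eeps$ by $\ERgap{i}-\eeps-o(\eeps)$ and then sending the $o(\eeps)$ terms into the residual constants). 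Once these technicalities are handled the computation parallels \citet[Section 4.6]{zhu2020thompson} and \citet[Section 5]{chang2021riskconstrained}, with the $h$-based Gamma tail replacing the usual chi-squared exponent in the variance term, and assembling the trivial $s \leq s_0$ contribution with the summed concentration tails yields the claimed upper bound.
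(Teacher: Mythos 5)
Your proposal is correct and follows essentially the same route as the paper: split the sum at the threshold $u=\max\{2\log(2n)/(\xi^2(\ERgap{i}-\eeps)^2),\ \log(2n)/h(\cdot)\}$, bound the first $u$ summands trivially by $1$, and for $s\geq u$ use Lemma~\ref{lem: tail_upper_bd} together with concentration of $\hat\mu_{i,s}$ and $\hat\sigma_{i,s}^2$ (as in \citet[S-3.4]{zhu2020thompson}) to get geometrically decaying tails summing to the $\eeps^{-4}$ and $\eeps^{-2}$ residuals. The only cosmetic difference is that the paper handles the non-monotonicity of $h$ by keeping both branches $\inv{h_-}$ and $\inv{h_+}$ of its inverse rather than forcing the argument to stay above $1$, but this does not change the substance of the argument.
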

\begin{proof}
Following from Lemma~\ref{lem: tail_upper_bd}, we have the following inclusions:
\begin{align*}
    &\sett{ {\hat \mu_{i,t} + \sqrt{\frac{2\log 2n}{s}} \leq \mu_i + \xi {(\ERgap{i} - \varepsilon)}}}\\
    &\subseteq \sett{\exp \paren{-\frac{s}{2}\paren{\mu_i - \mu + \xi (\ERgap{i} - \varepsilon)}^2} \leq \frac{1}{2n}}
\end{align*}
and 
\begin{align*}
    &\sett{\frac{\gamma \hat \sigma_{i,t}^2}{\gamma \sigma_i^2-2(1-\xi)(\ERgap{i} - \varepsilon)} \leq \inv {h_-}\paren{\frac{\log 2n}{s}}}\\
    &\cup \sett{\frac{\gamma \hat \sigma_{i,t}^2}{ \gamma \sigma_i^2-2(1-\xi)(\ERgap{i} - \varepsilon)} \geq \inv {h_+}\paren{\frac{\log 2n}{s}}}\\
    &\subseteq \sett{\exp \paren{-sh \paren{\frac{\gamma \sigma^2}{\gamma \sigma_i^2-2(1-\xi)(\ERgap{i} - \varepsilon) }}} \leq \frac{1}{2n}},
\end{align*}
where $\inv {h_+}(y) = \max \sett{x : h(x) = y}$ and $\inv {h_-}(y) = \min \sett{x : h(x) = y}$. Hence, for $$s \geq u = \max \sett{\frac{2\log (2n)}{\xi^2\paren{\ERgap{i} - \eeps}^2}, \frac{\log(2n)}{h \paren{\frac{\gamma \sigma^2}{\gamma \sigma_i^2-2(1-\xi)(\ERgap{i} - \varepsilon)}}}},$$
 by replacing $\paren{\mu_1 - \eeps,\frac{\hat\sigma_i^2}{\sigma_1^2+\eeps}}$ with $\paren{\mu_i + \xi \paren{\ERgap{i} - \eeps},\frac{\gamma \hat \sigma_{i,t}^2}{ \gamma \sigma_i^2-2(1-\xi)(\ERgap{i} - \varepsilon)}}$ in \citet[S-3.4]{zhu2020thompson}, we get $$\PP_t\paren{G_{is}>\frac{1}{n}} \leq \exp\paren{-\frac{s\eeps^2}{\sigma_i^2}} + \exp\paren{-(s-1) \frac{\eeps^2}{\sigma_i^4}}.$$
 Summing over $s$,
 \begin{align*}
     &\sum_{s=1}^n \PP_t\paren{G_{is} > \frac{1}{n}}\ \leq u + \sum_{s = \lceil u \rceil}^n \parenb{\exp\paren{-\frac{s\eeps^2}{\sigma_i^2}} + \exp\paren{-(s-1) \frac{\eeps^2}{\sigma_i^4}}}\\
&\leq 1 + \max \sett{\frac{2\log (2n)}{\xi^2\paren{\ERgap{i} - \eeps}^2}, \frac{\log(2n)}{h \paren{\frac{\gamma \sigma^2}{\gamma \sigma_i^2-2(1-\xi)(\ERgap{i} - \varepsilon)}}}} + \frac{C_4}{\eeps^4} + \frac{C_5}{\eeps^2}.
 \end{align*}
Finally, set $$\xi_\gamma = 1 - \frac{\gamma \sigma_i^2}{2\ERgap{i}} \paren{1-\frac{1}{\inv {h_+}\paren{ \Delta_{\mathrm{\ER}}^2(i,\gamma)/2}}} \in (0,1),$$ where $\inv {h_+}(y) = \max \sett{x : h(x) = y}$. By algebra,
\begin{align*}
    &h \paren{\frac{\gamma \sigma_i^2}{\gamma \sigma_i^2-2(1-\xi_\gamma)\ERgap{i}}}\\
    &= h \paren{\frac{\gamma \sigma_i^2}{\gamma \sigma_i^2-2\cdot \frac{\gamma \sigma_i^2}{2\ERgap{i}} \paren{1-\frac{1}{\inv {h_+}\paren{ \Delta_{\mathrm{\ER}}^2(i,\gamma)/2}}} \cdot \ERgap{i}}}\\
    &= h \paren{\frac{\gamma \sigma_i^2}{ \gamma \sigma_i^2 - \gamma \sigma_i^2\paren{1-\frac{1}{\inv {h_+}\paren{ \Delta_{\mathrm{\ER}}^2(i,\gamma)/2}}} }} = h \paren{\frac{\gamma \sigma_i^2}{ \paren{\frac{\gamma \sigma_i^2}{\inv {h_+}\paren{ \Delta_{\mathrm{\ER}}^2(i,\gamma)/2}}}}}\\
    &= h \paren{\inv {h_+}\paren{ \Delta_{\mathrm{\ER}}^2(i,\gamma)/2}} = \Delta_{\mathrm{\ER}}^2(i,\gamma)/2 \geq \xi_\gamma^2\Delta_{ \mathrm{\ER}}^2(i,\gamma)/2
\end{align*}
which implies $$\frac{1}{h \paren{\frac{\gamma \sigma_i^2}{ \gamma \sigma_i^2-2(1-\xi_\gamma)\ERgap{i}}}} \leq \frac{2}{\xi_\gamma^2\Delta_{\ER}^2(i)}.$$ and $\xi_\gamma \to 1^-$ as $\gamma \to 0^+$.
\end{proof}
\begin{proof}[Proof of Theorem~\ref{thm: lower_bd}]
We note that for any arm $i$ with distribution $\nu(i) \sim \mathcal N(\mu_i,\sigma_i^2)$ and $\nu'(i) \sim \mathcal N(\mu_i', {(\sigma_i')}^2)$, the KL-divergence given by $$\KL(\nu(i),\nu'(i)) = \log \frac{\sigma_i'}{\sigma_i} + \frac{\sigma_i^2 + {(\mu_i-\mu_i')}^2}{2{(\sigma_i')}^2} - \frac{1}{2}$$ is well-known. Denote $\mathcal S_i = \sett{\nu'(i) \in {\mathcal E}_{\mathcal N}^K : \ER(\nu'(i)) < \ER(1)}$. Denote $$R_i := \max \left\{\frac{2}{\xi^2\Delta_{\ER}^2(i)},\frac{1}{h \paren{\frac{\gamma \sigma^2}{\gamma \sigma_i^2-2(1-\xi)\ERgap{i}}}}\right\} > 0,$$ and fix $\eeps > 0$ and consider the arm with the distribution $\mathcal{N} \paren{\mu_i+\sigma_i \sqrt{2/R_i}+\eeps, \sigma_i^2}$. Then a direct computation gives
\begin{align*}
    \ER(\nu'(i)) - \ER(\nu(1))
    &= -(\mu_i +\sigma_i \sqrt{2/R_i}+ \eeps) + \frac{\gamma}{2}\sigma_i^2 - \paren{-\mu_i + \frac{\gamma}{2}\sigma_i^2}\\
    &= - (\sigma_i \sqrt{2/R_i}+\eeps) < 0,
\end{align*}
thus $ \ER(\nu'(i)) < \ER(\nu(1))$ and  $\nu'(i) \in \mathcal{S}_i$. Furthermore, 
\begin{align*}
    \KL(\nu(i), \nu'(i)) &= \log \frac{\sigma_i}{\sigma_i} + \frac{\sigma_i^2 + \paren{\mu_i - \paren{\mu_i + \sigma_i \sqrt{2/R_i} + \eeps}}^2}{2\sigma_i^2} - \frac{1}{2}\\
    &= \frac{1}{R_i} + \frac{(2 \sigma_i \sqrt{2/R_i}  + \eeps)\eeps}{2\sigma_i^2}.
\end{align*}
By the definition of $\eta$, $$\eta(i,\gamma) \leq \lim_{\eeps \to 0^+} \parenb{\frac{1}{R_i} + \frac{(2 \sigma_i \sqrt{2/R_i}  + \eeps)\eeps}{2\sigma_i^2}} = \frac{1}{R_i} \Longrightarrow \frac{1}{\eta(i,\gamma)} \geq R_i.$$
Hence, $$\liminf_{n \to \infty} \frac{\mathcal R_n(\pi)}{\log n} = \sum_{i \in \sdiff{[K]}{\sett{1}}} \paren{\liminf_{n \to \infty} \frac{\EE[T_{i,n}]}{\log n}} \ERgap{i} \geq \sum_{i \in \sdiff{[K]}{\sett{1}}} R_i \ERgap{i}.$$
Thus, we have that ERTS is asymptotically optimal unconditionally.
\end{proof}
\vskip 0.2in
\bibliography{ERTS}

\end{document}